\def\BState{\State\hskip-\ALG@thistlm}
\providecommand{\U}[1]{\protect\rule{.1in}{.1in}}
\providecommand{\U}[1]{\protect\rule{.1in}{.1in}}
\newtheorem{theorem}{Theorem}
\newtheorem{assumption}{Assumption}
\newtheorem{corollary}{Corollary}
\newtheorem{definition}{Definition}
\newtheorem{lemma}{Lemma}
\newtheorem{proposition}{Proposition}
\newtheorem{remark}{Remark}
\theoremstyle{remark}
\begin{document}
\title[DRO-Boosting]{A Distributionally Robust Boosting Algorithm}
\author{Blanchet, J.}
\address{Stanford University}
\email{jose.blanchet@stanford.edu}
\author{Kang, Y.}
\address{Columbia University}
\email{yang.kang@columbia.edu}
\author{Zhang, F.}
\address{Stanford University}
\email{fzh@stanford.edu }
\author{Hu, Z.}
\email{hu.zhangyi@gmail.com }
\maketitle
\date{\today }

\begin{abstract}
Distributionally Robust Optimization (DRO) has been shown to provide a
flexible framework for decision making under uncertainty and statistical
estimation. For example, recent works in DRO have shown that popular
statistical estimators can be interpreted as the solutions of suitable formulated data-driven DRO problems. In turn, this connection is used to
optimally select tuning parameters in terms of a principled approach informed by
robustness considerations. This paper contributes to this growing
literature, connecting DRO\ and statistics, by showing how boosting
algorithms can be studied via DRO. We propose a boosting type algorithm, named DRO-Boosting, as a procedure to solve our DRO formulation. Our DRO-Boosting algorithm recovers Adaptive Boosting (AdaBoost) in particular, thus showing that AdaBoost is effectively solving a DRO problem. We apply our algorithm to a financial dataset on credit card default payment prediction. We find that our approach compares favorably to alternative boosting methods which are widely used in practice.
\end{abstract}

	\section{INTRODUCTION}\label{Sec_Intro}
Distributional robustness in decision making under uncertainty is an
important topic with a long and successful history in the Operations Research
literature 
\cite{ben2000robust,ghaoui2003worst,bertsimas2004price,erdougan2006ambiguous}. In recent years, this topic
has been further fueled by distributionally robust optimization (DRO)
formulations applied to machine learning and statistical analysis. These
formulations have been shown to produce powerful insights in terms of
interpretability, parameter tuning, implementation of computational
procedures and the ability to enforce performance guarantees.

For instance, the work of \cite{lam2017empirical} studied connections between the statistical
theory of Empirical Likelihood and distributionally robust decision making
formulations. These connections have been helpful in order to obtain
statistical guarantees in a large and important class of data-driven DRO
problems.

The work of \cite{esfahani2018data} 
showed how DRO can be used to establish a connection to regularized logistic
regression, therefore explaining the role of regularization in the context of
improving out-of-sample performance and data-driven decision-making.

The works of \cite{blanchet2016robust,blanchet2017group}  further show that
square-root Lasso, support vector machines, and other estimators such as
group-Lasso can be recovered exactly from DRO\ formulations. In turn, in %
\cite{blanchet2016robust}, it is shown
that exploiting the DRO formulation leads to an optimality criterion for
choosing the associated regularization parameter in regularized estimators
which is both by robustness and statistical principles. Importantly, such
criterion is closely aligned with a well-developed theory of
high-dimensional statistics, but the criterion can be used even in the context
of non-linear decision making problems.

The work of \cite{duchi2016statistics} shows that variance regularization estimation can be cast in terms of a
DRO\ formulation which renders the implementation of such estimator both
practical and amenable to rigorous statistical and computational guarantees.

The list of recent papers that exploit DRO for the design of statistical
estimators, revisiting classical ideas to improve parameter tuning,
interpretation or implementation task is rapidly growing (see in \cite{blanchet2016robust,blanchet2017group,blanchet2017distributionally,blanchet2017data,blanchet2017doubly,esfahani2015data,shafieezadeh2015distributionally,gao2016distributionally,duchi2016statistics}).

This paper contributes to these rapidly-expanding research activities by
showing that DRO can naturally be used in the context of boosting
statistical procedures. In particular, we are able to show that Adaptive Boosting (AdaBoost) can be viewed as a particular example of DRO-Boosting with a suitable loss function and size of uncertainty, see Corollary \ref{Corollary_connection}.

These connections, as indicated earlier, are useful to enhance
interpretability. Further, as we explain, DRO provides a systematic and
disciplined approach for designing boosting methods. Also, the DRO\
formulation can be naturally used to fit tuning parameters (such as the
strength used to weight one model versus another in the face of statistical
evidence) in a statistically principled way.

Finally, we provide easy-to-implement algorithms which can be used to apply
our boosting methods, which we refer to as DRO-Boosting procedures.

As an application of our proposed family of procedures, we consider a
classification problem in the context of a credit card default payment
prediction problem. We find that our approach compares favorably to
alternative boosting methods which are widely used in practice.

The rest of this paper is organized as follows. In Section \ref{Sec_background_formulation} we provide a
general discussion about boosting algorithms. This section also introduces
various notations which are useful to formulate our problem.
Section \ref{Sect_Main} contains a precise description of the algorithm and the results
validating the correctness of the algorithms. We discuss in this section
also a connection to AdaBoost. 
Section \ref{Sec_Optimal_Uncertainty} contains statistical analysis for the
optimal selection of tuning parameters (such as the weight assigned to
different models in the presence of available evidence).
In Section \ref{Sec_Numerical}, we show the result of our algorithms in the context of an
application to credit card default payments. In Section \ref{Section_Proof}, we summarize the proofs of technical results.

\section{GENERAL BACKGROUND AND PROBLEM FORMULATION}\label{Sec_background_formulation}
We first start by describing a wide class of boosting algorithms. For
concreteness, the reader may focus on a standard supervised classification
framework. The ultimate goal is to predict the label $Y$ associated with a
predictive variable $X$. We then discuss our DRO\ problem formulation.

\subsection{General Notions for Boosting\label{Sec_Boosting}}

Suppose that at our disposal we have a set $\mathcal{F}$ of different
classifiers. Typically, $\mathcal{F}$ will contain finitely many elements,
but the DRO-Boosting algorithm can be implemented, in principle, in the case
of an infinite dimensional $\mathcal{F}$, assuming a suitable Hilbert-space
structure, as we shall discuss once we present our algorithms. We will keep
this in mind, but to simplify the exposition, let us assume that $\mathcal{F}
$ contains finitely many elements, say $\mathcal{F}=\{f_{1},...,f_{T}\}$.
However, we will point out the ways in which the infinite dimensional case
can be considered.

The different elements in $\mathcal{F}$ are called classifiers, also
known as predicting functions or \textquotedblleft
learners\textquotedblright . (Throughout our discussion we shall use learner
to refer to a given predicting function or a classifier.) For simplicity,
let us assume that the class of learners $\mathcal{F}$ has been trained with
independent data sets.

So, based on the learner $f_{r}\in \mathcal{F}$, we may have a decision rule
to decide on the predicted label given the observation or predictor $X=x$.
For example, such rule may by simply take the form $\widehat{y}=\mathrm{sgn}%
\left( f_{r}\left( x\right) \right) $ (where $\mathrm{sgn}\left( b\right)
=I\left( b\geq 0\right) -I\left( b<0\right) $) or, instead, the classifier
may estimate the probability that the label is $1$ based on a probabilistic
model which takes as input $f_{r}\left( x\right) $, in the context of
logistic regression, which attaches probability proportional to $\exp
(f_{r}\left( x\right) )/(1+\exp (f_{r}\left( x\right) )$ to the label being $%
1$ and $1/(1+\exp (f_{r}\left( x\right) )$ to the label being $-1$.

The idea of boosting is to combine the power of the learners in $\mathcal{F}$
to create a stronger learner. \cite{schapire1990strength}  showed that the idea of combining
relatively weak predictors or classifiers into a strong one having desirable
probably approximately correct (PAC) guarantees \cite{valiant1984theory} is feasible. Due to these appealing theoretical guarantees,
boosting algorithms have attracted substantial attention in the community,
first by considering static learning algorithms (e.g. %
\cite{schapire1990strength,freund1995boosting} ) and, also, inspired by
Littlestone and Warmuth's reweighting algorithm, by considering adaptive
algorithms, such as AdaBoost, which will be reviewed in the sequel. AdaBoost
was proposed by Freund and Schapire \cite{freund1997decision}  as an ensemble learning approach to classification problems, which adaptive combine multiple weak learners.

In the context of classification problems, many boosting algorithms can be
viewed as a weighted majority vote of all the weak learners (where the
weights are computed relative to the information carried by each of the weak
learners). However, boosting type algorithms are not only applicable to the
classification problems, they can also be used to combine the prediction of
simple regressors to improve the prediction power in regression of
non-categorical data, see for example \cite{drucker1997improving}.

We will now focus our discussion on finding boosting learners of the form 
$F\in \mathrm{Lin}(\mathcal{F}),$
where $\mathrm{Lin}(\mathcal{F})$ is the linear span generated by the class $%
\mathcal{F}$. In particular, if $\mathcal{F}=\{f_{1},...,f_{T}\}$,%
\begin{equation*}
\mathrm{Lin}(\mathcal{F})=\big\{F:F=\sum_{t=1}^{T}\alpha _{t}f_{t}\text{ for 
}\alpha _{t}\in \left( -\infty ,\infty \right) \text{, }1\leq t\leq T\big\}.
\end{equation*}%
More generally, we may assume that $\mathcal{F}$ contains elements in a
subspace of a Hilbert space and then $\mathrm{Lin}(\mathcal{F})$ is the
closure (under the Hilbert norm) of finitely many linear combinations of the
elements in $\mathcal{F}$.

As a consequence of our results, we will show that a suitable DRO
formulation applied to the family $\mathrm{Lin}(\mathcal{F})$ of boosting
learners\ can be used to systematically produce adaptive algorithms which
can be connected to well-known procedures such as AdaBoost.

Now, let us assume that we are given a data set $\mathcal{D}_{N}=\left\{
\left( X_{i},Y_{i}\right) \right\} _{i=1,\ldots ,N}$. The variable $X_{i}$
corresponds to the predictor and the outcome $Y_{i}$ corresponds to the
classification output or the label. As indicated earlier, let us assume that 
$Y_{i}\in \{-1,1\}$ for concreteness. We use $P_{N}$ to denote the empirical
measure associated with the data set $\mathcal{D}_{N}$. In particular, 
$
P_{N}=\frac{1}{N}\sum_{i=1}^{N}\mathbf{\delta }_{\left( X_{i},Y_{i}\right)
}\left( dx,dy\right) ,
$
where $\mathbf{\delta }_{\left( X_{i},Y_{i}\right) }\left( dx,dy\right) $ is
a point measure (or delta measure) centred at $\left( X_{i},Y_{i}\right) $.
We use $\mathbb{E}_{P_{N}}\left[\cdot \right] $ to denote the expectation
operator associated to the measure $P_{N}$. So, in particular, for example,
for any continuous and bounded $g\left( \cdot \right) $, we have that 
$
\mathbb{E}_{P_{N}}\left[ g\left( X,Y\right) \right] =\frac{1}{N}%
\sum_{i=1}^{N}g\left( X_{i},Y_{i}\right).
$

We introduce a loss function which can be used to measure the quality of our
learner, namely, $L\left( F\left( X\right) ,Y\right) $. Let us now discuss
examples of loss functions of interest.

Given a learner $F(\cdot )$, and an observation $\left( X,Y\right) $, the
margin corresponding to the observation is defined as $Y\cdot F(X)$. A loss
function $L(F(X),Y)$ is said to be a \textit{margin-type-cost-function} if
there exist a function $\phi (\cdot )$ such that $L(F(X),Y)=\phi (Y\cdot
F(X))$. In the context of regression problems, a popular loss function is
the squared loss, namely, $L(F(X),Y)=(Y-F(X))^{2}$.

A natural approach to search for a boosting learner is to solve the optimization problem 
\begin{equation}
\min_{F\in \mathrm{Lin}(\mathcal{F})}\mathbb{E}_{P_{N}}\left[ L\left(
F\left( X\right) ,Y\right) \right] .  \label{Standard}
\end{equation}

The choice of the function $\phi $ to define the loss function is relevant
to endow the boosting procedure with robustness characteristics (see, for
example, \cite{freund2009more,rosset2005robust}). While we focus on robustness
as well, our approach is different. We take the loss function as a given
modeling input and, as we shall explain, robustify the boosting training
procedure by quantifying the impact of distributional perturbations in the
distribution $P_{N}$. Our procedure could be applied in combination of the
choice of a given function $\phi $ which is chosen to mitigate outliers, for
example, as it is the case typically in standard robustness studies in
statistics (see \cite{huber1971robust}).
This combination may introduce a double layer of robustification and it may
be an interesting topic of future research, but we do not pursue it here.

Viewing the search of a best linear combination of functions from a family
of weaker learners to minimize the training loss, as in (\ref{Standard}),
can be understood in the vein of functional gradient descent %
\cite{friedman2001greedy}. For example, the AdaBoost algorithm is an instance of the functional gradient descent algorithm with margin cost function $\phi (Y\cdot
F(X)):=\exp (-Y\cdot F(X))$.

The connection between functional gradient descent and AdaBoost will be further discussed in Section \ref{Sect_Main}. 

\subsection{DRO-Boosting Formulation\label{Sec_DRO}}

The DRO formulation associated to (\ref{Standard}) takes the form

\begin{equation}
\min_{F\in \mathrm{Lin}(\mathcal{F})}\max_{P\in \mathcal{U}_{\delta }\left(
	P_{N}\right) }\mathbb{E}_{P}\left[ L\left( F\left( X\right) ,Y\right) \right]
.  \label{Eqn_RadaBoost}
\end{equation}
where $\mathcal{U}_{\delta }\left( P_{N}\right) $ is the distributional
uncertainty set, centred around $P_{N}$ and the size of uncertainty is $%
\delta >0$.

The motivation for the introduction of the inner maximization relative to (%
\ref{Standard}) is to mitigate potential overfitting problems inherent to
the training problems using directly empirical risk minimization (i.e.
formulation (\ref{Standard})). The data-driven DRO approach (\ref%
{Eqn_RadaBoost}) tries to address overfitting by alleviating the focus
solely on the observed evidence. Rather than empirical risk minimization,
our DRO procedure is finding the optimal decision $F$ that performs
uniformly well around the empirical measure, where the uniform performance
is evaluated within the distributional uncertainty set.

The data-driven DRO framework has been shown to be a valid procedure to
improve the generalization performance for many machine learning algorithms.
In the context of linear models, some of the popular machine learning
algorithms with good generalization performance, for instance, regularized
logistic regression, support vector machine (SVM), square-root LASSO, group
LASSO, among others, can be recovered exactly in terms of a DRO formulation
such as (\ref{Standard}) (see %
\cite{blanchet2017distributionally,esfahani2015data,blanchet2017group,blanchet2017doubly}).
In these applications, the set $\mathcal{U}_{\delta }\left( P_{N}\right) $
is defined in terms of the Wasserstein distance (see also 
\cite{blanchet2016quantifying,esfahani2018data,gao2016distributionally,zhao2018data}).

Other application areas of data-driven DRO which have shown promise because
of good empirical performance and theoretical guarantees include
reinforcement learning, graphic modeling, deep learning, etc, have been
shown to perform well empirically (see %
\cite{sinha2017certifying,fathony2018distributionally,smirnova2019distributionally}).

Finally, we also mention formulations of the distributional uncertainty set
based on moment constraints, for instance %
\cite{delage2010distributionally,goh2010distributionally}.

Our focus on this paper is on the use of the Kullback-Leibler (KL)
divergence in order to describe the set $\mathcal{U}_{\delta }\left(
P_{N}\right) $. We choose to use the KL divergence because we want to
establish the connection to well-know boosting algorithms, and we wish
to further add their interpretability in terms of robustness. But we
emphasize, as mentioned earlier, that a wide range of DRO\ formulations can
be applied. Formulations of problems such as (\ref{Standard}) based on the
KL\ and related divergence notions have been studied in the recent years,
see %
\cite{namkoong2016stochastic,shapiro2017distributionally,bayraksan2015data,ghosh2019robust,lam2016recovering,lam2017empirical}.

Let $\mathcal{P}(\mathcal{D}_{N})$ denote the set of all probability
distributions with support on $\mathcal{D}_{N}$. (Any element $P\in \mathcal{%
	P}(\mathcal{D}_{N})$ is a random measure because $\mathcal{D}_{N}$ is a
random set, but this issue is not relevant to implement our DRO-Boosting
algorithm, we just consider $\mathcal{D}_{N}$ as given. For statistical
guarantees, this issue is relevant, and it will be dealt with in the sequel.)

For any distribution $P\in \mathcal{P}(\mathcal{D}_{N})$, we define the
weight vector $\bm{\omega}_{P}=(\omega _{P,1},\ldots ,\omega _{P,N})$ such that $%
\omega _{P,\,i}=P\left( (X,Y)=(X_{i},Y_{i})\right) $ for $i=\overline{1,N}$.
In particular, for the empirical distribution $P_{N}$ we have $\bm{\omega}
_{P_{N}}=(1/N,\ldots ,1/N)$. Note that the set $\mathcal{P}(\mathcal{D}_{N})$
is isomorphic to the standard simplex 
$
C_{N}=\left\{ \bm{\omega} =(\omega _{1},\ldots ,\omega _{N})\;%
|\;\omega _{i}\geq 0,\;\sum_{i=1}^{N}\omega _{i}=1\right\} .
$

For any distribution $P\in \mathcal{P}(\mathcal{D}_{N})$, the
KL divergence between $P$ and $P_{N}$, denoted by $D(P\Vert
P_{N})$ (also known as the Relative Entropy or the Empirical Likelihood
Ratio (ELR)) is defined as 
\begin{equation*}
D(P\Vert P_{N})=-\frac{1}{N}\sum_{i=1}^{N}\log (N\omega _{P,i}).
\end{equation*}%
It is a well-known consequence of Jensen's inequality that $D(P\Vert
P_{N})\geq 0$, and $D(P\Vert P_{N})=0$ if and only if $\bm{\omega}_{P}=%
\bm{\omega}_{{P}_{N}}$. The distributional uncertainty set that we
consider is defined via 
\begin{equation*}
\mathcal{U}_{\delta }(P_{N})=\left\{ P\in \mathcal{P}(\mathcal{D}_{N})\middle%
|D(P\Vert P_{N})\leq \delta \right\} .
\end{equation*}

By substituting the definition of the uncertainty set $\mathcal{U}_{\delta
}(P_{N})$ into (\ref{Eqn_RadaBoost}), the DRO-Boosting model based on
KL uncertainty sets is well defined given the loss function,
the class $\mathcal{F}$, and the choice of $\delta $, which will be
discussed momentarily. 
\section{MAIN RESULTS\label{Sect_Main}}

In this section, we propose an algorithm based on functional gradient
descent to solve the DRO-Boosting \eqref{Eqn_RadaBoost}. As we shall see,
similar to the AdaBoost algorithm as in \cite{friedman2001greedy}, fitting the DRO-Boosting algorithm exhibits
a procedure that alternates between reweighting the worst case probability
distribution and updating the predicting function $F(\cdot )$. Therefore,
the connection between DRO-Boosting and AdaBoost will be naturally established.
Moreover, since choosing different types of distributional uncertainty sets, 
$\mathcal{U}_{\delta }(P_{N})$, potentially based on different notions of
discrepancies between $P_{N}$ and alternative distributions results in
different reweighting regimes, it can be speculated that our proposed
DRO-Boosting framework is more flexible than AdaBoost. 

For ease of notation and introduction of our functional gradient descent
method, in the rest of this paper, we define two functionals. For any index $%
i=\overline{1,N}$, the \emph{empirical loss functional} $\mathcal{L}_{i}:%
\mathrm{Lin}(\mathcal{F})\rightarrow \mathbb{R}$ is defined as 
$
\mathcal{L}_{i}(F)=L\left( F\left( X_{i}\right) ,Y_{i}\right) .
$
The \emph{robust loss functional} $\mathcal{L}_{rob}:\mathrm{Lin}(\mathcal{F}%
)\rightarrow \mathbb{R}$ is defined as 
$
\mathcal{L}_{rob}(F)=\max_{P\in \mathcal{U}(P_{N})}\mathbb{E}_{P}\left[
L\left( F\left( X\right) ,Y\right) \right] .
$

Due to the fact that any finitely supported distribution, $P$, is
characterized by its associated weights, $\bm{\omega}_{P}$, we can introduce
the \emph{weight uncertainty set} $\Omega _{\delta }(P_{N})$ defined as 
\begin{equation*}
\Omega _{\delta }(P_{N})=\left\{ \bm{\omega}_{P}\middle|P\in \mathcal{U}%
_{\delta }(P_{N})\right\} =\left\{ \bm{\omega}\in C_{N}\middle|-\frac{1}{N}%
\sum_{i=1}^{N}\log (N\omega _{i})\leq \delta \right\} ,
\end{equation*}
as an isomorphic counterpart of the distributional uncertainty set $\mathcal{%
	U}_{\delta }(P_{N})$.

Thus, using the right hand side weight vector $\bm{\omega}_{P}$ and the
weight uncertainty set $\Omega _{\delta }(P_{N})$, the functional $\mathcal{L%
}_{rob}$ can be rewritten as 
\begin{equation}
\mathcal{L}_{rob}(F)=\max_{\bm{\omega}\in \Omega _{\delta }(P_{N})}
\sum_{i=1}^{N}\omega _{i}\cdot \mathcal{L}_{i}(F).
\label{Eqn_Loss_Rob_Functional}
\end{equation}

We denote by $\Omega _{rob}^{\star }(F)$ the set of all maximizers to the above
optimization problem, i.e. 
\begin{equation*}
\Omega _{rob}^{\star }(F)=\left\{ \bm{\omega}\in \Omega _{\delta }(P_{N})\;%
\middle|\;\mathcal{L}_{rob}(F)=\frac{1}{N}\sum_{i=1}^{N}\omega _{i}\cdot 
\mathcal{L}_{i}(F)\right\} .
\end{equation*}

Using these definitions, the DRO-Boosting formulation \eqref{Eqn_RadaBoost}
admits an alternative expression, namely, 
\begin{equation}
\min_{F\in \mathrm{Lin}(\mathcal{F})}\mathcal{L}_{rob}(F),  \label{OP_B}
\end{equation}
consequently it suffices to find a best predicting function $F$ that
minimizes $\mathcal{L}_{rob}(F)$. In order to guarantee that the
optimization problem (\ref{OP_B}) has optimal solution, we impose the following assumption.

\begin{assumption}[Convex Loss Functional]
	\label{Ass_Convexity} The empirical loss functional $\mathcal{L}_{i}$ is
	assumed to be convex and continuous. In addition, there exist $\alpha \in \mathbb{R}$, such that the level set $\{F\in\mathrm{Lin}(\mathcal{F})\mid \mathcal{L}_i(F)\leq \alpha\}$ is compact. 
	
\end{assumption}

Next, to simplify the exposition, we impose an assumption which guarantees
that the data is rich enough relative to the class $ \mathrm{Lin}(\mathcal{F})$. We will
discuss how can one proceed if this assumption is violated.

\begin{assumption}[Separation]
	\label{Ass_Identifiablility} For any two different predicting functions $%
	F,G\in \mathrm{Lin}(\mathcal{F})$, there exist at least one $\left(
	X_{i},Y_{i}\right) \in \mathcal{D}_{N}$ such that $F(X_{i})\neq G(X_{i})$.
	In other words, the observed data $\mathcal{D}_{N}$ is rich enough to
	distinguish or separate two different learners in $\mathrm{Lin}(\mathcal{F})$%
	.
\end{assumption}

\noindent\textbf{Observation 1:} There are natural situations in which the Separation
Assumption may fail to hold. For example, if $f_{r}\in \mathcal{F}$ is a
regression tree, namely, 
$
f_{t}\left( x\right) =\sum_{j=1}^{m_{t}}I_{A_{j}}\left( x\right) ,
$
where the $A_{j}$'s are disjoint sets forming a partition on the domain of $%
x $, then when considering $\mathrm{Lin}(\mathcal{F})$ we may violate the
Separation Assumption may not hold. Nevertheless, we can define an
equivalence relationship \textquotedblleft $\sim $\textquotedblright\
defined via $F\sim G$ if and only if $F(X)=G(X)$ for all $X\in \mathcal{D}%
_{N}$, and then work with equivalence classes instead. Moreover, since the
dimension of the quotient space is at most $N$, we may assume that $\mathrm{%
	Lin}(\mathcal{F})$ is a finite dimensional space without loss of generality.
To ease the exposition, we will state the assumption $\mathrm{Lin}(\mathcal{F%
})$ is finite dimensional next.

\begin{assumption}[Finite Representation]
	\label{Ass_Finite_Dimension}We assume that a finite dimensional basis exists
	for $\mathrm{Lin}(\mathcal{F})$ (or $\ $has been extracted for $(\mathrm{Lin}%
	(\mathcal{F})/\sim )$). So, we can write $\mathrm{Lin}(\mathcal{F})=\mathrm{%
		Lin}\{f_{1},...,f_{T}\}$ (or $(\mathrm{Lin}(\mathcal{F})/\sim )=\mathrm{Lin}%
	\{f_{1},...,f_{T}\}$) where $\left\{ f_{1},f_{2},...,f_{T}\right\} $ form
	linearly independent functions .
\end{assumption}

If the dimension of $\mathrm{Lin}(\mathcal{F})$ is infinite, then $T$ will
grow with $N$ and this will have consequences for the rate of convergence of
the algorithm and the statistical analysis of the uncertainty set selection.
But this is not important to implement and run the algorithms that we will
present below.

We now are ready to summarize some properties of $\mathcal{L}_{rob}$ in
Lemma \ref{Lemma_Lrob_Property} and thereafter develop a subgradient descent
algorithm to find the optimal robust decision rule by solving \eqref{Eqn_RadaBoost}.

\begin{lemma}
	\label{Lemma_Lrob_Property} If Assumption \ref{Ass_Convexity} is imposed,
	then the robust loss functional $\mathcal{L}_{rob}:\mathrm{Lin}(\mathcal{F}%
	)\rightarrow \mathbb{R}$ is convex. In addition, the set of optimizer $%
	\Omega _{rob}^{\star }(F)$ is guaranteed to be nonempty.
\end{lemma}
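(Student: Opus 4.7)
The plan is to establish the two conclusions separately, and both reduce to fairly standard convex analysis arguments once the right objects are identified.

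For convexity of $\mathcal{L}_{rob}$, I would fix an arbitrary $\bm{\omega} \in \Omega_{\delta}(P_{N})$ and observe that since $\omega_{i} \geq 0$ for each $i$, the map $F \mapsto \sum_{i=1}^{N} \omega_{i}\,\mathcal{L}_{i}(F)$ is a nonnegative linear combination of the convex functionals $\mathcal{L}_{i}$ (convexity of $\mathcal{L}_{i}$ is granted by Assumption \ref{Ass_Convexity}), so it is itself convex on $\mathrm{Lin}(\mathcal{F})$. By \eqref{Eqn_Loss_Rob_Functional}, $\mathcal{L}_{rob}$ is the pointwise supremum of this family of convex functionals indexed by $\bm{\omega} \in \Omega_{\delta}(P_{N})$, and the pointwise supremum of convex functions is convex; this yields the first claim.

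For nonemptiness of $\Omega_{rob}^{\star}(F)$, I would argue that the inner maximization in \eqref{Eqn_Loss_Rob_Functional} is a continuous maximization over a nonempty compact feasible set, so that the Weierstrass extreme value theorem delivers a maximizer. Concretely, the simplex $C_{N}$ is compact in $\mathbb{R}^{N}$; extending $\omega_{i} \mapsto -\log(N\omega_{i})$ by $+\infty$ at $\omega_{i}=0$ makes $H(\bm{\omega}) := -\frac{1}{N}\sum_{i=1}^{N}\log(N\omega_{i})$ lower semicontinuous on $C_{N}$, so the sublevel set
\begin{equation*}
\Omega_{\delta}(P_{N}) = \{\bm{\omega} \in C_{N} \mid H(\bm{\omega}) \leq \delta\}
\end{equation*}
is closed in $C_{N}$, hence compact. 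It is also nonempty because the uniform weights $\bm{\omega}_{P_{N}} = (1/N,\dots,1/N)$ satisfy $H(\bm{\omega}_{P_{N}}) = 0 \leq \delta$. For fixed $F$, the objective $\bm{\omega} \mapsto \sum_{i=1}^{N}\omega_{i}\,\mathcal{L}_{i}(F)$ is linear in $\bm{\omega}$, and therefore continuous, so Weierstrass guarantees that the maximum is attained on $\Omega_{\delta}(P_{N})$; the set of maximizers is precisely $\Omega_{rob}^{\star}(F)$, which is thus nonempty.

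There is no serious obstacle here; the only subtle point worth flagging is the boundary behaviour of the KL constraint. One must verify that $H$ is lower semicontinuous after the conventional extension by $+\infty$ at the boundary of the positive orthant (equivalently, that KL-divergence is lower semicontinuous in the weak topology), so that $\Omega_{\delta}(P_{N})$ is actually closed in $C_{N}$ rather than merely closed in its relative interior. Once that is noted, both parts of the lemma follow immediately, and Assumption \ref{Ass_Convexity} is used only in the convexity half of the argument; the nonemptiness argument depends only on compactness of $\Omega_{\delta}(P_{N})$ and continuity of the linear inner objective.
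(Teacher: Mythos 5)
Your proposal is correct and follows essentially the same route as the paper's proof: convexity via the pointwise supremum of nonnegative combinations of the convex $\mathcal{L}_{i}$, and nonemptiness of $\Omega_{rob}^{\star}(F)$ via Weierstrass on the compact set $\Omega_{\delta}(P_{N})$ with a continuous (linear) objective. Your additional remark on the lower semicontinuity of the KL constraint at the boundary of the simplex just fills in a compactness detail the paper takes for granted.
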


The predicting function controls the value of $\mathcal{L}_{rob}(F)$ solely
via $F(X_{i})$. Thus, to derive the functional gradient descent algorithm,
one needs to construct a metric on $\mathcal{F}$ where the structure is
determined only by $F(X_{i})$ as well. To this end, we consider the inner
product $\langle \cdot ,\cdot \rangle _{\mathcal{D}_{N}}$, where 
\begin{equation*}
\langle F,G\rangle _{\mathcal{D}_{N}}=\frac{1}{N}\sum_{i=1}^{N}F(X_{i})\cdot
G(X_{i}),\qquad \forall F,G\in \mathrm{Lin}(\mathcal{F}).
\end{equation*}%

Due to the Assumption \ref{Ass_Identifiablility}, $\langle \cdot ,\cdot
\rangle _{\mathcal{D}_{N}}$ is a well defined inner product on space $%
\mathrm{Lin}(\mathcal{F})$, and we denote by $\Vert \cdot \Vert _{\mathcal{D}%
	_{N}}$ the induced norm of $\langle \cdot ,\cdot \rangle _{\mathcal{D}_{N}}$
on $\mathrm{Lin}(\mathcal{F})$. As the dimension of space $\mathrm{Lin}(%
\mathcal{F})$ can be assumed to be finite due to Observation 1, the space $%
\mathrm{Lin}(\mathcal{F})$ endowed with inner product $\langle \cdot ,\cdot
\rangle _{\mathcal{D}_{N}}$ is a Hilbert space. Note that the topology
induced by $\langle \cdot ,\cdot \rangle _{\mathcal{D}_{N}}$ is isomorphic
to the Euclidean topology in $\mathbb{R}^{N}$. 

In order to formalize the functional subgradient algorithm, we first
introduce the definition of functional subgradient and \emph{functional
	sub-differential}. 
\begin{definition}[Functional Sub-Gradient (Gradient)]\label{Def_subgradient}
	For a convex functional $\mathcal{L}:\mathrm{Lin}(\mathcal{%
		F})\rightarrow \mathbb{R}$ and $F_{0}\in \mathrm{Lin}(\mathcal{F})$, a
	linear functional $\mathcal{G}:\mathrm{Lin}(\mathcal{F})\rightarrow \mathbb{R%
	}$, (i.e. $\mathcal{G}\in \mathrm{Lin}(\mathcal{F})^{\ast }$) is called the 
	\emph{functional subgradient} of $\mathcal{L}$ at $F_{0}$ if 
	\begin{equation*}
	\mathcal{L}(F)-\mathcal{L}(F_{0})\geq \mathcal{G}(F-F_{0}),\qquad \forall
	F\in \mathrm{Lin}(\mathcal{F}).
	\end{equation*}
	
	The \emph{functional sub-differential} of $\mathcal{L}$ at $F_{0}$, denoted
	by $\partial \mathcal{L}(F_{0})$, is defined as 
	\begin{equation*}
	\partial \mathcal{L}(F_{0})=\left\{ \mathcal{G}\;\middle|\;\mathcal{G}%
	\mbox{
		is a functional subgradient of }\mathcal{L}\mbox{ at }F_{0}\right\} .
	\end{equation*}
	
	If the functional sub-differential set $\partial \mathcal{L}(F_{0})$ is a
	singleton set, then the functional $\mathcal{L}$ is said to be
	differentiable at $F_{0}$, in which case the only element in $\partial 
	\mathcal{L}(F_{0})$ is denoted by $\nabla \mathcal{L}(F_{0})$, called the 
	\emph{functional gradient} of $\mathcal{L}$ at $F_{0}$. Intuitively, one may
	regard the functional gradient ${\nabla }\mathcal{L}(F_{0})$ as the response
	of an infinitesimal change in the predicting function $F_{0}$. 
\end{definition}


\begin{proposition}
	\label{Thm_Gradient} Suppose that Assumptions \ref{Ass_Convexity}-\ref%
	{Ass_Finite_Dimension} are imposed, then the functional sub-differential $%
	\partial \mathcal{L}_{rob}(F)$ is given by 
	\begin{equation}\label{Eqn_Subdifferential}
	\partial \mathcal{L}_{rob}(F)=\mathbf{Co}\bigcup_{\bm{\omega}\in \Omega
		_{rob}^{\star }(F)}\left( \omega _{1}\cdot \partial \mathcal{L}%
	_{1}(F)+\cdots +\omega _{N}\cdot \partial \mathcal{L}_{N}(F)\right)
	\end{equation}%
	Here, for sets $A$ and $B$, $\mathbf{Co}\;A$ denotes the convex hull of set $%
	A$, and $A+B:=\{a+b\mid a\in A,b\in B\}$ denotes the Minkowski sum of sets $%
	A $ and $B$.
\end{proposition}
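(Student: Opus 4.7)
The plan is to recognize $\mathcal{L}_{rob}$ as a pointwise maximum of a continuously-parametrized family of convex functionals and then invoke the classical subdifferential formula for pointwise suprema of convex functions (Danskin / Valadier--Rockafellar). Define
\begin{equation*}
g(F,\bm{\omega}) = \sum_{i=1}^{N} \omega_i \,\mathcal{L}_i(F),
\end{equation*}
so that by \eqref{Eqn_Loss_Rob_Functional} we have $\mathcal{L}_{rob}(F) = \max_{\bm{\omega}\in \Omega_{\delta}(P_N)} g(F,\bm{\omega})$ and $\Omega^\star_{rob}(F)$ is precisely the arg-max set. For each fixed $\bm{\omega}\in \Omega_\delta(P_N)$, since $\omega_i\geq 0$ and each $\mathcal{L}_i$ is convex by Assumption \ref{Ass_Convexity}, $g(\cdot,\bm{\omega})$ is convex on $\mathrm{Lin}(\mathcal{F})$. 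Because each $\mathcal{L}_i$ is real-valued and continuous on the finite-dimensional Hilbert space $\mathrm{Lin}(\mathcal{F})$ (Assumptions \ref{Ass_Convexity} and \ref{Ass_Finite_Dimension}), all $\mathrm{dom}\,\mathcal{L}_i$ coincide with the whole space, so the Moreau--Rockafellar additivity rule gives, with the convention $0\cdot\partial\mathcal{L}_i(F)=\{0\}$,
\begin{equation*}
\partial_F g(F,\bm{\omega}) \;=\; \omega_1\,\partial\mathcal{L}_1(F)+\cdots+\omega_N\,\partial\mathcal{L}_N(F).
\end{equation*}

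Next I would verify the hypotheses of the subdifferential-of-sup theorem. The set $\Omega_\delta(P_N)$ is a closed subset of the simplex $C_N$ (the sublevel set of the continuous convex map $\bm{\omega}\mapsto -\tfrac{1}{N}\sum_i\log(N\omega_i)$) and hence compact; the map $g$ is jointly continuous in $(F,\bm{\omega})$, being linear in $\bm{\omega}$ and continuous in $F$. Together with convexity of $g(\cdot,\bm{\omega})$, this guarantees that $\Omega^\star_{rob}(F)$ is nonempty and compact (non-emptiness is Lemma \ref{Lemma_Lrob_Property}). The finite-dimensional version of the max formula (e.g.\ Hiriart-Urruty--Lemar\'echal, Vol.~II, Thm.~VI.4.4.2, or Rockafellar's \emph{Convex Analysis}, Thm.~23.8) then yields
\begin{equation*}
\partial \mathcal{L}_{rob}(F) \;=\; \overline{\mathbf{Co}} \bigcup_{\bm{\omega}\in \Omega^\star_{rob}(F)} \partial_F g(F,\bm{\omega}),
\end{equation*}
where the closure is taken in the Hilbert norm $\|\cdot\|_{\mathcal{D}_N}$.

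The final step is to show that the closure is redundant, which matches the form of \eqref{Eqn_Subdifferential} stated in the proposition. Since $\mathrm{Lin}(\mathcal{F})$ is finite dimensional and each $\mathcal{L}_i$ is continuous convex, each $\partial \mathcal{L}_i(F)$ is nonempty, convex, and compact. The set-valued map $\bm{\omega}\mapsto \sum_{i=1}^N \omega_i\,\partial \mathcal{L}_i(F)$ is then upper-semicontinuous with compact convex values on the compact set $\Omega^\star_{rob}(F)$, so its graph is compact and the union $\bigcup_{\bm{\omega}\in \Omega^\star_{rob}(F)} \sum_i \omega_i \partial \mathcal{L}_i(F)$ is compact. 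By Carath\'eodory's theorem in finite dimensions, the convex hull of a compact set is compact, hence closed, and the overline can be removed.

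The main obstacle I anticipate is the clean verification of the two interchange steps: (a) applying Moreau--Rockafellar additivity uniformly over $\bm{\omega}$, including the boundary indices with $\omega_i=0$; and (b) justifying the removal of the closure in the max formula. The first is routine thanks to the everywhere-finiteness of the $\mathcal{L}_i$, while the second is precisely where Assumption \ref{Ass_Finite_Dimension} is essential: in an infinite-dimensional $\mathrm{Lin}(\mathcal{F})$ the closure is generally unavoidable, and only the finite-dimensional reduction licensed by Observation 1 makes the stated formula hold without a closure operator.
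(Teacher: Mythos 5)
Your proposal is correct and follows essentially the same route as the paper's proof: the Moreau--Rockafellar additivity rule for $\partial\bigl(\sum_i \omega_i \mathcal{L}_i\bigr)$ followed by the Hiriart-Urruty--Lemar\'echal sup-formula (the paper cites the same Theorem 4.4.2 of Chapter D) applied over the compact argmax set $\Omega_{rob}^{\star}(F)$. Your additional step showing the closure of the convex hull is redundant in finite dimensions is a detail the paper's proof leaves implicit, and is a welcome clarification rather than a divergence.
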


\begin{corollary}
	\label{Corollary_gradient} Suppose that Assumptions \ref{Ass_Convexity}-\ref%
	{Ass_Identifiablility} are imposed, then if for some functional $F$ we have $%
	\Omega _{rob}^{\star }(F)=\{\bm{\omega}^{\star }(F)\}$ and $\partial 
	\mathcal{L}_{i}(F)=\{\nabla \mathcal{L}_{i}(F)\}$ for each $i$, then $%
	\mathcal{L}_{rob}$ is also differentiable at $F$ and 
	\begin{equation*}
	\nabla \mathcal{L}_{rob}(F)=\omega _{1}^{\star }(F)\cdot \nabla \mathcal{L}%
	_{1}(F)+\cdots +\omega _{N}^{\star }(F)\cdot \nabla \mathcal{L}_{N}(F).
	\end{equation*}
\end{corollary}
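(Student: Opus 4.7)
The plan is to obtain the Corollary as a direct specialization of Proposition~\ref{Thm_Gradient}: simplify the general subdifferential formula \eqref{Eqn_Subdifferential} under the two singleton hypotheses, conclude that $\partial\mathcal{L}_{rob}(F)$ is itself a singleton, and then invoke Definition~\ref{Def_subgradient} to identify its unique element as the functional gradient.

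First I would recall the representation
\begin{equation*}
\partial \mathcal{L}_{rob}(F)=\mathbf{Co}\bigcup_{\bm{\omega}\in \Omega_{rob}^{\star }(F)}\bigl( \omega _{1}\cdot \partial \mathcal{L}_{1}(F)+\cdots +\omega _{N}\cdot \partial \mathcal{L}_{N}(F)\bigr),
\end{equation*}
which is available since Assumptions~\ref{Ass_Convexity}--\ref{Ass_Identifiablility} hold (the Finite Representation Assumption~\ref{Ass_Finite_Dimension} invoked in Proposition~\ref{Thm_Gradient} is, by Observation~1, a WLOG reduction once Assumption~\ref{Ass_Identifiablility} is in force). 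Under the hypothesis $\Omega_{rob}^{\star}(F)=\{\bm{\omega}^\star(F)\}$ the outer union collapses to a single term, and under the hypothesis $\partial\mathcal{L}_i(F)=\{\nabla\mathcal{L}_i(F)\}$ each scaled set $\omega_i^\star(F)\cdot\partial\mathcal{L}_i(F)$ reduces to the singleton $\{\omega_i^\star(F)\cdot\nabla\mathcal{L}_i(F)\}$. The Minkowski sum of singletons is the singleton of the ordinary sum, so the union in \eqref{Eqn_Subdifferential} reduces to the singleton $\{\sum_{i=1}^{N}\omega_i^\star(F)\cdot\nabla\mathcal{L}_i(F)\}$, and taking the convex hull of a one-point set leaves it unchanged.

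Thus $\partial\mathcal{L}_{rob}(F)=\{\sum_{i=1}^N \omega_i^\star(F)\cdot\nabla\mathcal{L}_i(F)\}$. By Definition~\ref{Def_subgradient}, $\mathcal{L}_{rob}$ is differentiable at $F$ precisely when its subdifferential there is a singleton, and in that case the unique element is $\nabla\mathcal{L}_{rob}(F)$; identifying the singleton yields the displayed formula.

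There is not really a hard technical step here: the entire content of the Corollary is packaged inside Proposition~\ref{Thm_Gradient}, and what remains is only a set-theoretic simplification (singleton union, Minkowski sum of singletons, convex hull of a singleton). The only point that warrants explicit mention is that differentiability is being defined through the singleton condition on $\partial\mathcal{L}_{rob}$, so one must verify both inclusions implicit in ``is a singleton'' rather than merely exhibiting a candidate gradient; this is exactly what the proposition provides.
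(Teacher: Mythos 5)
Your proposal is correct and follows essentially the same route as the paper, which likewise deduces the corollary by specializing the subdifferential formula of Proposition~\ref{Thm_Gradient} and then identifying a singleton subdifferential with differentiability (the paper cites \cite{hiriart2012fundamentals}, Chapter~D, Corollary~2.1.4 for that last equivalence, whereas you appeal directly to Definition~\ref{Def_subgradient}, which in this paper takes the singleton condition as the definition of differentiability). Your remark that Assumption~\ref{Ass_Finite_Dimension} is recovered via Observation~1 is a sensible way to reconcile the corollary's weaker stated hypotheses with those of the proposition.
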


Using Proposition \ref{Thm_Gradient} and Corollary \ref{Corollary_gradient},
we can make sense of the functional sub-differential $\partial \mathcal{L}%
_{rob}(F)$ or even the functional gradient $\nabla \mathcal{L}_{rob}(F)$.
However, in order to ensure the trajectory of functional gradient descent lies in the space $\mathrm{Lin}(\mathcal{F})$, one wants to find a weak linear in $\mathcal{F}$ to approximate the functional gradient $\mathcal{G}$. This simply means finding a best approximation in $\Pi_{\mathcal{F}}(\mathcal{G})$ such that 
$
\Pi_{\mathcal{F}}(\mathcal{G}) = \arg\min_{F\in \mathcal{F}} \|F-\mathcal{G}\|_{\mathcal{D}_N}.
$

Using these observation, the functional subgradient descent algorithm for
solving the DRO-Boosting problem \eqref{Eqn_RadaBoost} is given in Algorithm %
\ref{Algo_DroBoosting}. 

Note that in Algorithm \ref{Algo_DroBoosting}, we have to compute a worst case probability weight $\bm\omega^\star \in\Omega _{rob}^{\star }(F)$ for some functional $F$. If $\mathcal{L}_{1}(F)=\cdots = \mathcal{L}_{N}(F)$, we can simply pick $\bm{\omega}^{\star} = \bm{\omega}_{P_N} = (1/N,\ldots, 1/N)$. Otherwise, the worst case probability can be computed using the Lemma \ref{Lem_worst_probability}.

\begin{lemma}\label{Lem_worst_probability}
	For $F\in \mathrm{Lin}(\mathcal{F})$ and $\beta \in (-\infty,-\max_{i}\left\{
	\mathcal{L}_{i}(F)\right\})$, define
	\begin{equation}\label{Eqn-Psi}
	\Psi(\beta;F) = \frac{1}{n}\sum_{i=1}^{n}\log\left(\frac{-1}{\mathcal{L}_{i}(F)+\beta}\right)
	-\log\left(\frac{1}{n}\sum_{i=1}^{n}\frac{-1}{\mathcal{L}_{i}(F)+\beta}\right)+\delta.
	\end{equation}
	Suppose that there exist $i, j$ such that $\mathcal{L}_{i}(F)\neq \mathcal{L}_{j}(F)$, then $\Psi(\beta;F)$ is strictly increasing in $\beta$. Furthermore, if we set $\omega^\star_i=\frac{(\mathcal{L}_{i}(F)+\beta^\star)^{-1}}{\sum_{k=1}^{n}(\mathcal{L}_{k}(F)+\beta^\star)^{-1}}$, where $\beta^\star$ is the unique root of $\Psi(\beta;F)=0$, then $\bm{\omega}^{\star}\in\Omega _{rob}^{\star }(F)$.
\end{lemma}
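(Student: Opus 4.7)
The plan is to recognize the inner problem defining $\mathcal{L}_{rob}(F)$ as maximization of a linear functional over the intersection of the simplex $C_N$ with a KL ball, and to derive and verify the KKT conditions. Since the objective is linear (hence concave) and the feasible set is convex with Slater's condition satisfied at $\bm\omega_{P_N}$, the KKT conditions are sufficient for optimality, so I only need to exhibit dual variables $(\lambda^\star,\mu^\star)$ that certify the proposed $\bm\omega^\star$.

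First I would write the Lagrangian
\begin{equation*}
L(\bm\omega,\lambda,\mu) \;=\; \sum_{i=1}^{N}\omega_i\,\mathcal{L}_i(F) \;+\; \lambda\Big(\delta + \tfrac{1}{N}\sum_{i=1}^{N}\log(N\omega_i)\Big) \;-\; \mu\Big(\sum_{i=1}^{N}\omega_i - 1\Big),
\end{equation*}
with $\lambda\ge 0$. Stationarity in $\omega_i$ gives $\omega_i = (\lambda/N)/(\mu - \mathcal{L}_i(F))$; positivity forces $\mu>\max_i \mathcal{L}_i(F)$, so setting $\beta:=-\mu$ we obtain $\omega_i\propto -1/(\mathcal{L}_i(F)+\beta)$ with $\beta<-\max_i \mathcal{L}_i(F)$, and the simplex normalization fixes the proportionality constant to precisely the form stated. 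Since the $\mathcal{L}_i(F)$ are not all equal while $\bm\omega_{P_N}$ sits strictly inside the KL ball, the linear objective cannot attain its maximum in the interior, so the KL constraint is active. Plugging $\bm\omega^\star$ into $D(P\|P_N) = -\tfrac{1}{N}\sum \log(N\omega_i^\star) = \log(S(\beta)/N) - \tfrac{1}{N}\sum \log s_i(\beta)$, where $s_i(\beta):=-1/(\mathcal{L}_i(F)+\beta)$ and $S(\beta):=\sum_i s_i(\beta)$, the equation $D(P\|P_N)=\delta$ reduces exactly to $\Psi(\beta;F)=0$.

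To pin down $\beta^\star$, I would establish strict monotonicity of $\Psi$ and existence of a unique root. Using $s_i'(\beta)=s_i(\beta)^2$ and $(\log s_i)'=s_i$, direct differentiation gives
\begin{equation*}
\Psi'(\beta) \;=\; \frac{1}{N}\sum_i s_i \;-\; \frac{\sum_i s_i^2}{\sum_i s_i} \;=\; -\frac{1}{N\bar s}\sum_i (s_i-\bar s)^2,\qquad \bar s:=S/N,
\end{equation*}
which is strictly nonzero whenever not all $s_i$ agree, i.e.\ whenever some $\mathcal{L}_i(F)\neq \mathcal{L}_j(F)$. (The sign in fact shows $\Psi$ is strictly \emph{decreasing} on the stated interval; this discrepancy with the wording ``increasing'' is immaterial, since only strict monotonicity is used.) For the endpoint behavior, as $\beta\to -\infty$ all $s_i\to 0$ at the common leading rate $|\beta|^{-1}$, so $\omega_i^\star\to 1/N$ and $\Psi(\beta;F)\to \delta>0$; as $\beta\uparrow -\max_i \mathcal{L}_i(F)$ the mass concentrates on the $\arg\max$, forcing $D(P\|P_N)\to\infty$ and $\Psi(\beta;F)\to -\infty$. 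The intermediate value theorem then yields a unique $\beta^\star$.

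Finally, I would assemble the KKT certificate: with $\beta^\star$ the unique root, $\bm\omega^\star$ is strictly positive and sums to one, saturates the KL bound (by $\Psi(\beta^\star;F)=0$), and satisfies stationarity by construction, with dual variables $\mu^\star=-\beta^\star$ and $\lambda^\star = N\omega_i^\star(\mu^\star-\mathcal{L}_i(F))>0$; hence $\bm\omega^\star\in\Omega_{rob}^\star(F)$. I expect the main care points to be keeping the signs in the Lagrangian consistent and correctly identifying the limits of $\Psi$ at the endpoints of the admissible $\beta$-interval; the calculus itself is routine.
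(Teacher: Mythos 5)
Your proof is correct and follows essentially the same route as the paper's own (verify the KKT conditions for the linear objective over the KL ball, and obtain strict monotonicity of $\Psi$ by differentiation plus Cauchy--Schwarz --- your variance identity $\Psi'(\beta) = -\frac{1}{N\bar s}\sum_i (s_i-\bar s)^2$ is exactly that step), only worked out in full detail where the paper merely sketches. Your side observation that $\Psi$ is in fact strictly \emph{decreasing} on the stated interval is also correct (the word ``increasing'' in the lemma statement is a slip), and as you note only strict monotonicity together with the endpoint limits $\Psi\to\delta>0$ and $\Psi\to-\infty$ is needed to get the unique root $\beta^\star$.
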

Corollary \ref{Corollary_connection} shows that Algorithm \ref{Algo_DroBoosting} exactly recovers the AdaBoost algorithm proposed in \cite{freund1997decision}. The proof of Corollary \ref{Corollary_connection} is elementary.
\begin{corollary}[Connection to AdaBoost]\label{Corollary_connection}
	Suppose that  $\mathcal{L}_{i}(F) = -\exp(Y_i\cdot F(X_i))$ and $$\delta = \log\left(\frac{1}{n}\sum_{i=1}^{n}\exp\big(-Y_i\cdot F(X_i)\big)\right) - \frac{1}{n}\sum_{i=1}^{n}Y_i\cdot F(X_i),$$ 
	then $\beta^\star = 0$ and the worst case probability in Lemma \ref{Lem_worst_probability} is given by  $\omega^\star_i=\frac{(\mathcal{L}_{i}(F)+\beta^\star)^{-1}}{\sum_{k=1}^{n}(\mathcal{L}_{k}(F)+\beta^\star)^{-1}} = \frac{\exp(-Y_i\cdot F(X_i))}{\sum_{k = 1}^n\exp(-Y_k\cdot F(X_k))}$.
\end{corollary}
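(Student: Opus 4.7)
The plan is a direct substitution into the formulas of Lemma~\ref{Lem_worst_probability}: because the corollary asserts a closed-form identity and not the existence of a solution, no genuine obstacle arises, and the proof reduces to three routine verifications that I would carry out in order.

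First, I would confirm that $\beta^\star=0$ lies in the admissible interval. Since $\mathcal{L}_i(F)=-\exp(Y_iF(X_i))<0$ for every $i$, the quantity $-\max_i\mathcal{L}_i(F)=\min_i\exp(Y_iF(X_i))$ is strictly positive, and hence $\beta=0$ belongs to $(-\infty,-\max_i\mathcal{L}_i(F))$, so plugging it into \eqref{Eqn-Psi} is legitimate.

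Second, I would verify that $\Psi(0;F)=0$ under the prescribed $\delta$. The key identity is
$$\frac{-1}{\mathcal{L}_i(F)+0}=\frac{-1}{-\exp(Y_iF(X_i))}=\exp(-Y_iF(X_i)),$$
which converts the first term of \eqref{Eqn-Psi} into $-\frac{1}{n}\sum_iY_iF(X_i)$ and the second term into $-\log\!\bigl(\frac{1}{n}\sum_i\exp(-Y_iF(X_i))\bigr)$. Adding the $\delta$ prescribed in the corollary then cancels both contributions, yielding $\Psi(0;F)=0$. Since Lemma~\ref{Lem_worst_probability} guarantees that $\Psi(\cdot;F)$ is strictly increasing on its domain (its hypothesis, namely that not all $\mathcal{L}_i(F)$ coincide, excludes the degenerate case), this zero is the unique root, so $\beta^\star=0$.

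Third, I would plug $\beta^\star=0$ into the closed-form weight expression from Lemma~\ref{Lem_worst_probability} to obtain
$$\omega^\star_i=\frac{(\mathcal{L}_i(F))^{-1}}{\sum_{k=1}^{n}(\mathcal{L}_k(F))^{-1}}=\frac{-\exp(-Y_iF(X_i))}{-\sum_{k=1}^{n}\exp(-Y_kF(X_k))}=\frac{\exp(-Y_iF(X_i))}{\sum_{k=1}^{n}\exp(-Y_kF(X_k))},$$
which is exactly the AdaBoost multiplicative reweighting rule. The only subtle point in the whole argument is sign bookkeeping: the negatives introduced by $\mathcal{L}_i(F)<0$ need to be tracked carefully so that they cancel consistently, both in the weight formula and inside the logarithms appearing in $\Psi$; after that, the corollary follows without further work.
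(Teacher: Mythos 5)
Your overall strategy --- direct substitution of $\beta=0$ into $\Psi(\cdot;F)$ from \eqref{Eqn-Psi} and into the weight formula of Lemma \ref{Lem_worst_probability} --- is exactly the ``elementary'' verification the paper alludes to (it prints no proof of this corollary), and your first and third steps are fine: $0$ does lie in $(-\infty,-\max_i\mathcal{L}_i(F))$ since every $\mathcal{L}_i(F)$ is strictly negative, and once $\beta^\star=0$ is granted the two minus signs cancel in the normalized ratio to give the AdaBoost weights.

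The gap is in your second step. Write $m_i=Y_iF(X_i)$. You correctly compute the first term of \eqref{Eqn-Psi} at $\beta=0$ as $-\frac1n\sum_i m_i$ and the second as $-\log\bigl(\frac1n\sum_i e^{-m_i}\bigr)$, but adding the corollary's $\delta=\log\bigl(\frac1n\sum_i e^{-m_i}\bigr)-\frac1n\sum_i m_i$ does \emph{not} ``cancel both contributions'': the logarithmic terms cancel, while the two linear terms have the same sign and accumulate, so $\Psi(0;F)=-\frac2n\sum_i m_i$, which is not zero in general. For $\Psi(0;F)=0$ one needs $\delta=\log\bigl(\frac1n\sum_i e^{-m_i}\bigr)+\frac1n\sum_i m_i$, i.e.\ the sign in front of $\frac1n\sum_i Y_iF(X_i)$ in the statement must be $+$. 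That corrected expression is the only sensible one: it equals the logarithm of the mean of the $e^{-m_i}$ minus the mean of their logarithms, hence is nonnegative by Jensen's inequality, as a Kullback--Leibler radius must be; the $\delta$ as literally printed can be negative (e.g.\ $n=2$, $m_1=1$, $m_2=2$ gives $\delta\approx-2.88$), in which case $\mathcal{U}_\delta(P_N)$ is empty and the whole setup collapses. So either you detect and repair the sign typo in the statement before verifying $\Psi(0;F)=0$, or the claimed cancellation is simply false; as written, the proof of the key identity does not go through.
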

\begin{algorithm}
	\caption{DRO-Boosting}\label{Algo_DroBoosting}
	\begin{algorithmic}[1]
		\State{\textbf{Initialize}} weight $\bm{\omega}^{\star} = (1/N,\ldots, 1/N)$ and 
		$F_0\left(x\right)=0$.
		\For{ $t:=0$ to $T$}
		\State{\textbf{Update }}${F}_{{t}}$: With fixed $\bm{\omega}^{\star} $, we compute a subgradient $\mathcal{G}_t\in\partial\mathcal{L}_{rob}(F_t)$ using Proposition \ref{Thm_Gradient}.
		\State Find a best approximation to $\mathcal{G}_t$ in class $\mathcal{F}$ and get the update direction $\Pi_{\mathcal{F}}(\mathcal{G}_t)$.
		\State Apply line search for step size $\alpha_t$; update the function as $F_{t+1} = F_{t} -\alpha_t\Pi_{\mathcal{F}}(\mathcal{G}_t)$. 
		\State{\textbf{Update $\bm{\omega}^{\star} $}}: With fixed $F_{t+1}$, we can evaluate the loss function for the observations as 
		\begin{equation*}
		\mathcal{L}_{i}(F_{t+1})= L\left(F_{t+1}(X_{i}),Y_{i}\right).
		\end{equation*}
		\State Apply Newton-Raphson Method to compute the unique root of $\beta^\star$ of $\Psi(\beta; F_{t+1})$ and update the worst case probability $\bm{\omega}^{\star}$ according to Lemma \ref{Lem_worst_probability}.
		\EndFor
		\State We terminate the algorithm if there is no improvement could 
		be make or we reach the maximal step size. 
		\State \textbf{Output} $F_{T}$.		
	\end{algorithmic}
\end{algorithm}%
\begin{remark}[Convergence Analysis] Under the technical assumption that $\mathcal{L}_{rob}$ has Lipschitz continuous gradient, the convergence of Algorithm \ref{Algo_DroBoosting} follows from Theorem 2 of \cite{mason2000boosting}. This assumption is typically violated in our setting. However, by introducing a soft maximum approximation to $\mathcal{L}_{rob}$, as in Theorem 2 of \cite{blanchet2017data}, one may apply the results of \cite{mason2000boosting} directly, at the expense of a small (user-controlled) error.  We tested empirically the convergence of the algorithm, successfully. The smoothing analysis will appear elsewhere.
\end{remark}
Now, note that Algorithm \ref{Algo_DroBoosting} requires supplying the
parameter $\delta $. This parameter is typically chosen using
cross-validation. However, when $\ \mathrm{Lin}(\mathcal{F})=\mathrm{Lin}%
\{f_{1},...,f_{T}\}$, with $T$ fixed, we can use the work of \cite{duchi2016statistics,lam2017empirical}, which establishes a connection to Empirical Likelihood to obtain $\delta$. This will be discussed in Section \ref{Sec_Optimal_Uncertainty}.

\section{OPTIMAL SELECTION OF THE DISTRIBUTIONAL UNCERTAINTY SIZE\label{Sec_Optimal_Uncertainty} }

We now explain how to choose $\delta $ using statistical principles and
avoiding cross-validation. The strategy is to invoke the theory of Empirical
Likelihood, following the approach in \cite{lam2017empirical,blanchet2017distributionally},
as we shall explain. We assume that the data 
$\mathcal{D}_{N}=\{\left( X_{i},Y_{i}\right) _{i=1}^{N}\}$ is i.i.d. from an
underlying distribution $P_{\ast }$, which is unknown. We use $P_{\ast
}^{\infty }$ to denote the product measure $P_{\ast }\times P_{\ast }\times
...$ governing the distribution of the whole sequence $\{\left(
X_{n},Y_{n}\right) :n\geq 1\}$. Moreover, in this section we will assume, in
Assumption \ref{Ass_Finite_Dimension}, that $\mathrm{Lin}(\mathcal{F})=%
\mathrm{Lin}\{f_{1},...,f_{T}\}$ where $T$ is fixed and independent of $N$.

For each probability measure $P$, there is an associated optimal predicting
function $F$ as the minimizer to the risk minimization problem 
$
F^{P}=\arg \min_{F\in \mathrm{Lin}(\mathcal{F})}\mathbb{E}_{P}\left[ L\left(
F\left( X\right) ,Y\right) \right].$ 
We assume the convexity and smoothness for the loss function as discussed
previously in Section \ref{Sect_Main}. Following the criterion discussed in \cite{blanchet2017distributionally}, we define 
$
\Lambda _{\delta }\left( P_{N}\right) =\{F^{P}:P\in \mathcal{U}_{\delta
}(P_{N})\}.
$
The set $\Lambda _{\delta }\left( P_{N}\right) $ can be interpreted as a
confidence region for the functional parameter $F^{P_{\ast }}$ and we are
interested in minimizing the size of the confidence region, $\delta $, while
guaranteeing a desired level of coverage for the confidence region $\Lambda
_{\delta }\left( P_{N}\right) $. In other words, we want to choose $\delta $
as the solution to the problem 
\begin{equation}
\min \{\delta :P_{\ast }^{\infty }\left( F^{P_{\ast }}\in \Lambda _{\delta
}\left( P_{N}\right) \right) \geq 1-\alpha \},  \label{Com_delta}
\end{equation}%
where $1-\alpha $ is a desired confidence level, say 95\%, which implies
choosing $\alpha =.05$. This problem is challenging to solve, but we will
provide an asymptotic approximation to it as $N\rightarrow \infty $.

We make the following assumptions to proceed the discussion. 
\begin{assumption}[First Order Optimality Condition]\label{Assump_1st_Opt}
	The optimal choice $F^{P}$ is characterized via the corresponding
	first-order optimality condition, which yields 
	$
	\mathbb{E}_{P}\left[ \nabla_F L\left( F^{P}\left( X\right) ,Y\right) \right]
	=0.
	$
\end{assumption}
By the definition of the functional gradient in Definition \ref{Def_subgradient},
we know $\nabla_F L\left( F^{P}\left( X\right) ,Y\right)\in\mathbb{R}^{T}$.

The derivation of the asymptotic results relies on central limitation theorem, 
and we assume the existence of the second moment for the functional gradient. \
\begin{assumption}[Square Integrability]\label{Assump_Square_integrable}
	We assume that 
	$
	\mathbb{E}_{P_{\ast}}\left[ \left\Vert \nabla_F L\left( F^{P}\left( X\right)
	,Y\right) \right\Vert _{2}^{2}\right] <\infty .
	$
\end{assumption}

In order to compute $\delta $ according to (\ref{Com_delta}) we will provide
a more convenient representation for the set $\{F^{P_{\ast }}\in \Lambda
_{\delta }\left( P_{N}\right) \}$. For any $F$, we can define a set of
probability measures $M\left( F\right) $ for which $F$ is an optimal
boosting learner, that is, $P\in M\left( F\right) $ if $\mathbb{E}_{P}\left[
\nabla _{F}L\left( F\left( X\right) ,Y\right) \right] =0$. Consequently,  
\begin{equation*}
M\left( F\right) =\left\{ P\;\middle\vert\;\mathbb{E}_{P}\left[ \nabla _{F}L\left( F\left(
X\right) ,Y\right) \right] =0\right\} .  \label{Eqn_plausible_set}
\end{equation*}%
Define the smallest Kullback-Leibler discrepancy between $P_{N}$ and any
element of $M\left( F\right) $ via   
\begin{equation}\label{Eqn_ELP}
R_{N}\left( F\right) =\min_{P}\left\{ D(P\Vert P_{N})\;\middle|\;\mathbb{E}_{P}%
\left[ \nabla _{F}L\left( F\left( X\right) ,Y\right) \right] =0\right\} .
\end{equation}%
It is immediate to see that $F^{P_{\ast }}\in \Lambda _{\delta }\left(
P_{N}\right) $ if and only if there exists an element in $P\in M\left(
F^{P_{\ast }}\right) $ such that $D(P\Vert P_{N})\leq \delta $.
Consequently, we have 
\begin{equation}
F^{P_{\ast }}\in \Lambda _{\delta }\left( P_{N}\right) \Longleftrightarrow
R_{N}\left( F^{P_{\ast }}\right) \leq \delta .  \label{Equiv}
\end{equation}

%

The asymptotic analysis of the object $R_{N}\left( F^{P_{\ast }}\right) $,
known as the Empirical Profile Likelihood (EPL) has been studied
extensively. From (\ref{Equiv}) we have that the solution to (\ref{Com_delta}%
) is precisely the $1-\alpha $ quantile of $R_{N}\left( F^{P_{\ast }}\right) 
$. Therefore, to approximate $\delta $, it suffices to estimate the
asymptotic distribution of $R_{N}\left( F^{P_{\ast }}\right) $. 

We apply the techniques in \cite{lam2017empirical} to derive the asymptotic
distribution of $R_{N}\left( F\right) $ as $n\rightarrow \infty $, and pick
the $1-\alpha $ quantile of the asymptotic distribution as our uncertainty
size.

Figure \ref{Fig_Illustration} gives an illustration of the equivalence in (%
\ref{Equiv}). The emphasized line corresponds to the Kullback-Leibler
divergence between the empirical distribution and the manifold $M\left(
F^{P_{\ast }}\right) $. 

\begin{figure}[th]
	\vskip 0.2in
	\par
	\begin{center}
		\centerline{\includegraphics[width=10cm]{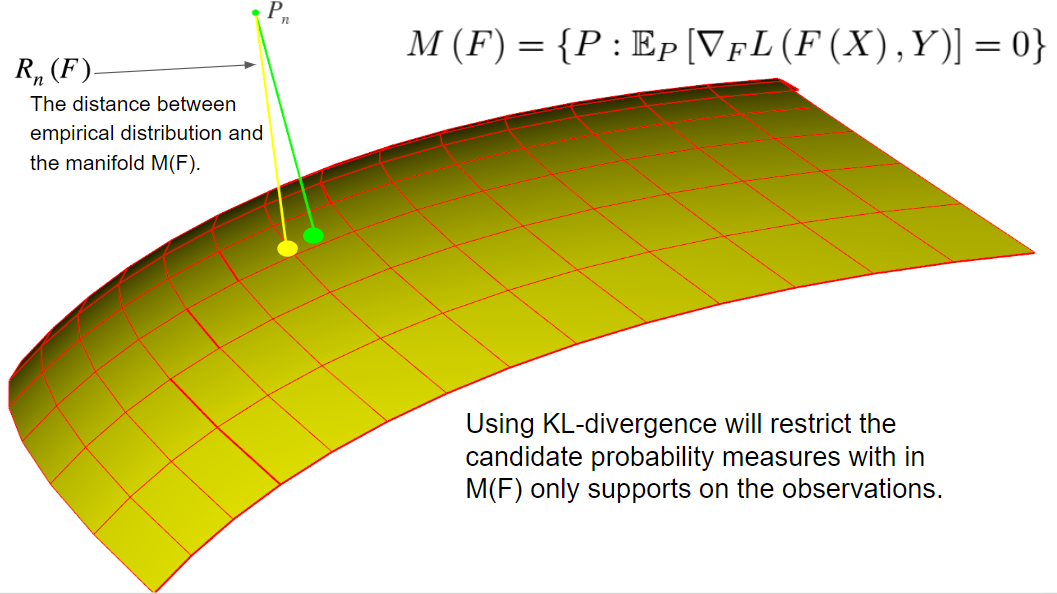}}
	\end{center}
	\par
	\vskip -0.2in
	\caption{Pictorial illustration for the geometric meaning of the ELP
		function.}
	\label{Fig_Illustration}
\end{figure}

The asymptotic results for \eqref{Eqn_ELP} is similar as introduced in the
literature for the empirical likelihood theorem as in %
\cite{owen1988empirical,qin1994empirical,owen2001empirical}. There show
that the asymptotic distribution is chi-square and the degree of freedom
depends on the estimating equation. We state the results in Theorem \ref%
{Thm_ELT} below.

\begin{theorem}[Generalized Empirical Likelihood Theorem]\label{Thm_RWPI}
	\label{Thm_ELT} We assume that our loss function $L$ is smooth, the
	functional gradient is well defined, and Assumption \ref{Assump_1st_Opt} and Assumption \ref{Assump_Square_integrable} hold.
	Then we have the EPL function defined in \eqref{Eqn_ELP} has asymptotic
	chi-square distribution, i.e. 
	\begin{equation}
	2NR_{N}\left( F^{P_\ast}\right) \Rightarrow \chi _{T}^{2},
	\end{equation}%
	where $\chi _{T}$ is the chi-square distribution with degree of freedom
	equal $T$ and $\Rightarrow $ stands for weak convergence.
\end{theorem}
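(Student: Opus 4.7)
The plan is to follow Owen's classical empirical likelihood argument \cite{owen1988empirical,qin1994empirical}, applied to the $T$-dimensional estimating equation $g(X,Y) := \nabla_F L(F^{P_\ast}(X), Y)$, where $T$ is the finite dimension from Assumption \ref{Ass_Finite_Dimension}. First, solve the finite-dimensional convex problem in (\ref{Eqn_ELP}) using Lagrange multipliers. Writing $g_i := g(X_i,Y_i)$, the first-order conditions give the closed-form primal solution $\omega_i^\star = [N(1 + \lambda^{\top} g_i)]^{-1}$, with the multiplier for the normalization constraint pinned to one by multiplying the FOC by $\omega_i$ and summing, and with the dual multiplier $\lambda = \lambda_N \in \mathbb{R}^T$ characterized by $\tfrac{1}{N}\sum_{i=1}^N g_i/(1 + \lambda^{\top} g_i) = 0$. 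Substituting back gives the identity
\begin{equation*}
2N R_N(F^{P_\ast}) \;=\; 2 \sum_{i=1}^N \log\bigl(1 + \lambda_N^{\top} g_i\bigr),
\end{equation*}
so the entire asymptotic analysis reduces to understanding $\lambda_N$.

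The second step is a two-term Taylor expansion around $\lambda = 0$. Let $\bar{g} := \tfrac{1}{N}\sum_i g_i$ and $S_N := \tfrac{1}{N}\sum_i g_i g_i^{\top}$. Assumption \ref{Assump_1st_Opt} gives $\mathbb{E}_{P_\ast}[g] = 0$, so the classical CLT and LLN (using Assumption \ref{Assump_Square_integrable}) yield $\sqrt{N}\bar{g} \Rightarrow \mathcal{N}(0,\Sigma)$ and $S_N \to \Sigma$ in probability, where $\Sigma := \mathbb{E}_{P_\ast}[g g^{\top}]$ is assumed non-degenerate. Expanding the dual equation yields $\bar{g} - S_N \lambda_N + o_P(N^{-1/2}) = 0$, hence $\lambda_N = S_N^{-1}\bar{g} + o_P(N^{-1/2})$; in particular $\|\lambda_N\| = O_P(N^{-1/2})$. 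Expanding the objective to second order then gives
\begin{equation*}
2N R_N(F^{P_\ast}) \;=\; 2N\lambda_N^{\top} \bar{g} - N\lambda_N^{\top} S_N \lambda_N + o_P(1) \;=\; N\bar{g}^{\top} S_N^{-1} \bar{g} + o_P(1).
\end{equation*}
Slutsky's lemma together with the continuous mapping theorem applied to the quadratic form $v \mapsto v^{\top} \Sigma^{-1} v$ identifies the limit as the squared norm of a $T$-dimensional standard Gaussian, namely $\chi^2_T$.

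The main technical obstacle is justifying the Taylor expansions uniformly, which requires $\max_{i\leq N}|\lambda_N^{\top} g_i| = o_P(1)$ so that the expansion of $\log(1 + \cdot)$ is valid and the cubic remainders are truly negligible. This is a standard consequence of $\mathbb{E}_{P_\ast}\|g\|^2 < \infty$, since that moment bound forces $\max_i \|g_i\| = o_P(\sqrt{N})$, which combined with $\|\lambda_N\| = O_P(N^{-1/2})$ gives the required uniform control. A secondary issue is the existence of $\lambda_N$: a convex-duality argument shows the dual problem has a unique solution whenever $0$ lies in the interior of the convex hull of $\{g_1,\ldots,g_N\}$, which happens with probability tending to one by the CLT for $\bar{g}$. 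Beyond these standard technical ingredients, no new ideas are required: the finite-dimensionality of $\mathrm{Lin}(\mathcal{F})$ from Assumption \ref{Ass_Finite_Dimension} makes this precisely a $T$-dimensional estimating-equation empirical likelihood problem in disguise, and the $T$ degrees of freedom in the limit correspond exactly to the $T$ components of the estimating equation $\mathbb{E}_P[\nabla_F L] = 0$.
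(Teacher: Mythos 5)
Your proposal is correct and follows essentially the same route as the paper's own proof: both reduce (\ref{Eqn_ELP}) to a $T$-dimensional estimating-equation empirical likelihood problem, solve the Lagrangian to get $\omega_i = [N(1+\lambda_N^{\top} g_i)]^{-1}$, expand $\log(1+\lambda_N^{\top} g_i)$ to second order to obtain $2NR_N(F^{P_\ast}) = N\bar{g}^{\top}S_N^{-1}\bar{g} + o_P(1)$, and conclude by the CLT. The only difference is that you spell out the uniform-control and existence arguments for $\lambda_N$ explicitly, whereas the paper delegates them to Lemmas 11.1 and 11.2 of \cite{owen2001empirical}.
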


We can notice the rate of convergence is free of the data dimension and the
dimension of the dimension of the functional space. Theorem \ref{Thm_ELT}
is a generalization of the empirical likelihood theorem for general estimating equation introduced in  \cite{qin1994empirical}.

\section{NUMERICAL EXPERIMENTS}\label{Sec_Numerical} 

%
In this section, we apply our DRO-Boosting algorithm
to predict default in credit card payment. We take the credit-card default
payment data of Taiwan from UCI machine learning database %
\cite{Lichman:2013}. The data has 23
predictors and the response is binary stands for non-default and default.
There are 30000 observations in the data set with 6636 defaults and 23364
non-defaults.

We take the binary classification tree as the weak learner as the basis
function, and we consider exponential loss function for model fitting. We
compare our model with the AdaBoost algorithm, which is the state-of-art
boosting algorithm for practical consideration.

Every time, we randomly split the data into a training set, with 3000
observations, and a testing set, with the rest 27000 data points. We train
the model on the training set as we illustrated in Algorithm \ref%
{Algo_DroBoosting}. We consider the basis function as 5-layer classification
tree, and we assume the dimension (or effective degree of freedom) of the
functional space is roughly 30. To pick the uncertainty set, we apply the
method introduced in Section \ref{Sec_Optimal_Uncertainty}, where we pick
the level of uncertainty to be $90\%$. We consider a more complex basis
model, a 5-layer classification tree, as the weak learner, which is mainly
due to we try to explore a case where the AdaBoost model is more likely
to overfit the data.

We report the accuracy, true positive rate, false negative rate, and the
exponential loss in Table \ref{Table_numerical}, where we can observe
superior performance for the DRO-Boosting model on the testing set.

\begin{table}[ht]
	\centering
	\small
	\begin{tabular}{|l|l|l|l|l|}
		\hline
		\begin{tabular}{@{}l}
			Reporting \\ 
			Domain%
		\end{tabular}
		& \multicolumn{2}{c|}{Training Set} & \multicolumn{2}{c|}{Testing Set} \\ 
		\hline
		Algorithm & AdaBoost & DRO-Boosting & AdaBoost & DRO-Boosting \\ \hline
		\begin{tabular}{@{}l}
			Accuracy \\ 
			$P\left(Y_{true} = Y_{pred}\right)$%
		\end{tabular}
		& $0.948\pm 0.001$ & $0.840\pm0.006$ & $0.788\pm0.004$ & $0.819\pm0.003$ \\ 
		\hline
		\begin{tabular}{@{}l}
			False Negative Rate \\ 
			$P\left(P_{pred}=1\middle | P_{true}=1\right)$%
		\end{tabular}
		& $0.807\pm 0.039$ & $0.406\pm 0.032$ & $0.352\pm0.021$ & $0.354\pm 0.023$
		\\ \hline
		\begin{tabular}{@{}l}
			True Positive Rate \\ 
			$P\left(P_{pred}=-1\middle | P_{true}=-1\right)$%
		\end{tabular}
		& $0.988\pm0.005$ & $0.963\pm 0.006$ & $0.913\pm 0.009$ & $0.946\pm0.006$ \\ 
		\hline
		Average Exponential Loss & $0.497\pm0.041$ & $0.738\pm0.013$ & $%
		0.860\pm0.009 $ & $0.801\pm 0.007$ \\ \hline
	\end{tabular}%
	\caption{Numerical results for applying DRO-Boosting to credit card default
		payment prediction. }
	\label{Table_numerical}
\end{table}
We can observe from our numerical experiment that the DRO formalization
helps improve the performance on the testing set. The worst-case expected
loss function tires to mimic the testing error and avoid overemphasis on the
training set. Thus we observe higher training error using DRO, while the
advantage is reflected in the testing error.

\section{PROOFS OF THE TECHNICAL RESULTS}\label{Section_Proof}
The proofs are presented in the order as they appear in the paper.
\begin{proof}[Proof of Lemma \ref{Lemma_Lrob_Property}]
	For the first argument, each $\mathcal{L}_{i}$ is a convex functional due to
	Assumption \ref{Ass_Convexity}, so is their convex combination $%
	\sum_{i=1}^{N}\omega _{i}\cdot \mathcal{L}_{i}(F)$. After taking
	maximization over $\bm{\omega}\in \Omega _{\delta }(P_{N})$, the functional $%
	\mathcal{L}_{rob}(F)$ is still convex. For the second argument, the
	objective function on the right hand side of \eqref{Eqn_Loss_Rob_Functional}
	is continuous in $\bm{\omega}$ and the feasible region $\Omega _{\delta
	}(P_{N})$ is compact, so the optimizer set $\Omega _{rob}^{\star }(F)$ is
	guaranteed to be nonempty.
\end{proof}
\begin{proof}[Proof of Proposition \ref{Thm_Gradient}]
	According to \cite{hiriart2012fundamentals} Chapter D, Theorem 4.1.1, we have 
	\begin{equation*}
	\partial \left( \omega _{1}\cdot \mathcal{L}_{1}(F)+\cdots +\omega _{N}\cdot 
	\mathcal{L}_{N}(F)\right) =\omega _{1}\cdot \partial \mathcal{L}%
	_{1}(F)+\cdots +\omega _{N}\cdot \partial \mathcal{L}_{N}(F).
	\end{equation*}%
	for all $\bm{\omega}\in C_{N}$. In addition, note that each $\mathcal{L}%
	_{i}(F)$ is convex and upper semi-continuous according to Assumption \ref%
	{Ass_Convexity}, so $\omega _{1}\cdot \mathcal{L}_{1}(F)+\cdots +\omega
	_{N}\cdot \mathcal{L}_{N}(F)$, as a convex combination of $\mathcal{L}%
	_{i}(F) $, is also convex and upper semi-continuous. Furthermore, the set $%
	\Omega _{rob}^{\star }(F)$ is a compact set. Consequently, using %
	\cite{hiriart2012fundamentals},
	Chapter D, Theorem 4.4.2, the desired result \eqref{Eqn_Subdifferential} is
	proved.
\end{proof}
\begin{proof}[Proof of Corollary \ref{Corollary_gradient}]
	The result follows from Theorem \ref{Thm_Gradient} and %
	\cite{hiriart2012fundamentals},
	Chapter D, Corollary 2.1.4.
\end{proof}
\begin{proof}[Proof of Lemma \ref{Lem_worst_probability}]
	The strict monotonicity of the fucntion $\Psi(\beta; F)$ can be shown by taking derivative and applying Cauchy Schwartz inequality. Using the fact that $\beta^\star$ is the root of $\Psi(\beta;F)$, it follows that $\bm{\omega}^\star\in\Omega_{\delta}(P_n)$. The optimality of $\bm{\omega}^\star$ is proved by verifying the Karush–Kuhn–Tucker conditions of the convex optimization problem $\max_{\bm{\omega}\in \Omega _{\delta }(P_{N})}
	\sum_{i=1}^{N}\omega _{i}\cdot \mathcal{L}_{i}(F)$.
\end{proof}

\begin{proof}[Proof of Theorem \ref{Thm_RWPI}]
	The EPL function \eqref{Eqn_ELP} is a convex optimization with constraint. We can write the optimization problem in the Lagrange form as
	$
	\max_{\bm{\omega}\in C_N}\left\{\frac{1}{N}\sum_{i=1}^{N} N\omega_{i} +\lambda_{N}^{T} \nabla _{F}L\left( F\left( X_{i}\right),Y_{i}\right)\right\},
	$
	where $\lambda_{N}$ is the Lagrange multiplier. The optimization problem could be solved by its first order optimality condition, and it gives
	$
	\omega_{i} = \frac{1}{N} \frac{1}{1+\lambda_{N}^T\cdot \nabla _{F}L\left( F\left( X_{i}\right),Y_{i}\right)}$ and $
	\frac{1}{N}\sum_{i=1}^{N}\frac{\nabla _{F}L\left( F\left( X_{i}\right),Y_{i}\right)}{1 + \lambda_{N}^{T} \nabla _{F}L\left( F\left( X_{i}\right),Y_{i}\right)}=0.$
	We denote $M_{i} =\nabla _{F}L\left( F\left( X_{i}\right),Y_{i}\right) $, $W_{i} = \lambda_{N}^{T} M_{i}$, and 
	$S_{N} = \frac{1}{N}\sum_{i=1}^{N}M_{i}\cdot M_{i}^T$. Then we apply Lemma 11.1 and Lemma 11.2 in \cite{owen2001empirical}, we have
	\begin{align}\label{Eqn_ELP_con_1}
	\lambda_{N} = S_N^{-1}\cdot\overline{M_N} + \zeta_N \text{ with } \overline{M_N} = \frac{\sum_{i=1}^{N}M_{i}}{N} \text{ and }\zeta_N = o_p\left(N^{-1/2}\right),\\
	\label{Eqn_ELP_con_2}
	\log \left(1+W_i\right) =W_i - \frac{1}{2}W_i^2 + \eta_i \text{ with }
	\sum_{i}^{N}\eta_i = o_p(1).
	\end{align}
	Therefore, for the ELP function, we have
	\begin{align*}
	2NR_{N}\left( F^{P_\ast}\right)  =& 
	-2\sum_{i=1}^{N}\log N\omega_i = 2\sum_{i=1}^{N} \log \left(1+W_i\right) = 2\sum_{i=1}^{N}W_i - \sum_{i=1}^{N}W_i^2 + 
	2\sum_{i=1}^{N} \eta_i\\
	=& N \overline{M_N} ^T S_N^-1 \overline{M_N} 
	- N\zeta_N S_N^-1 \zeta_N + 2\sum_{i=1}^{N} \eta_i = N \overline{M_N} ^T S_N^-1 \overline{M_N} +o_p(1)\Rightarrow \chi^2_T
	\end{align*}
	The first two equations are by definition, the equation four, five and six are applying \eqref{Eqn_ELP_con_1} and \eqref{Eqn_ELP_con_2}, while the final equation is the central limitation theorem for the estimating equation. 
\end{proof}

\bibliographystyle{apalike}
\bibliography{DRO_data_driven_cost}

\end{document}